\documentclass[letterpaper]{article}

\usepackage[preprint]{aaai2027}

\usepackage{amsmath,amssymb,amsthm,mathtools,bm}

\usepackage{needspace}
\usepackage{graphicx}
\usepackage{subcaption}
\usepackage{longtable}
\usepackage{float}
\usepackage{booktabs}
\usepackage{array}
\usepackage{multirow}

\usepackage{enumitem}
\usepackage{natbib}
\usepackage{url}

\graphicspath{{figures/}}

\setlist[itemize]{
    leftmargin=1.05em,
    itemsep=0.18em,
    topsep=0.25em
}

\setlist[enumerate]{
    leftmargin=1.20em,
    itemsep=0.18em,
    topsep=0.25em
}

\newtheorem{definition}{Definition}
\newtheorem{proposition}{Proposition}
\newtheorem{theorem}{Theorem}

\newcommand{\R}{\mathbb{R}}
\newcommand{\E}{\mathbb{E}}
\newcommand{\X}{\mathcal{X}}
\newcommand{\Y}{\mathcal{Y}}

\newcommand{\Acal}{\mathcal{A}}
\newcommand{\Fcal}{\mathcal{F}}

\newcommand{\poly}{\operatorname{poly}}
\newcommand{\argmin}{\operatorname*{arg\,min}}

\newcommand{\Tr}{\operatorname{tr}}

\newcolumntype{L}[1]{>{\raggedright\arraybackslash}p{#1}}

\title{The Structure Gap: When More Data Cannot Replace\\
Structured Reasoning}

\author{
    Kheer Sagar Patel, Mojtaba Soltanalian
}

\affiliations{
    University of Illinois Chicago
}

\begin{document}

\maketitle

\begin{abstract}
A persistent belief in modern AI is that sufficiently large learners trained on sufficiently much data can make explicit domain knowledge optional. This paper examines a limitation of that belief. We define \emph{structure-encoded problems}: tasks whose input-output map is determined, or sharply constrained, by known structure such as a solver, symmetry group, graph, physical law, measurement operator, grammar, combinatorial search space, or optimization objective. For such tasks, the central question is not expressivity---a dense neural network may contain the desired map---but finite-data identification, extrapolation, and certification. We introduce the \emph{structure gap}: the finite-resource gap between a method that receives the governing structure and a learner that receives only examples. The strongest useful form of the claim is asymptotic rather than absolute: structure-agnostic learning may approach a structured solver as coverage grows, but every finite dataset can leave residual uncertainty that the structure resolves by construction. We prove this exactly for a Gaussian inverse problem: the known-operator Wiener solver attains Bayes risk, while the dense empirical learner in the correct linear class has persistent expected excess risk. We also prove a cell-coverage lower bound and an unobserved-orbit non-identifiability result. Controlled experiments in known-operator inversion, sparse recovery, and graph reachability with size extrapolation instantiate the mechanisms. When validated structure is available, it can constrain the hypothesis space before learning begins, leaving learning to address components the structure does not specify.
\end{abstract}

\section{Introduction}

Deep learning has changed what is possible in perception, inference, language, and control. Its success has also encouraged a powerful informal claim: with enough data and capacity, explicit structure and domain knowledge become optional. The claim is sometimes right. When the relevant structure is unknown, approximate, or too complex to specify, learned representations may be the best available tool. But in many scientific, engineering, and symbolic domains the relevant structure is already known. A parity-check matrix is not a statistical hint; it is the code. A graph relabeling symmetry is not an empirical regularity; it is a mathematical equivalence. A Fourier basis, array manifold, grammar, shortest-path recurrence, measurement matrix, likelihood, conservation law, or convex objective is information about the task.

This paper studies what happens when that information is withheld from the learner. Broad feedforward networks can approximate large function classes on compact domains \citep{Cybenko1989,Hornik1989}, and depth can improve approximation efficiency \citep{EldanShamir2016,Telgarsky2016,Yarotsky2017}. Universal approximation, however, does not imply \emph{efficient learnability}. A fully connected network (FCN) may contain, somewhere in its parameter space, a representation of a structured solver. That does not mean finite examples identify the representation, certify behavior on unobserved regimes, or recover the law that generated the labels.

\begin{figure*}[t]
    \centering
    \includegraphics[width=\textwidth]{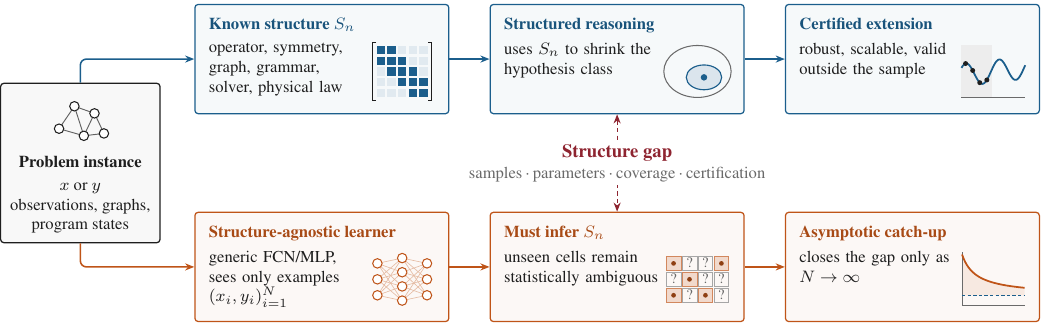}
    \caption{Conceptual view of the structure gap. A structure-aware system uses a known law, solver, graph, symmetry, grammar, or objective to restrict the hypothesis class before learning. A structure-agnostic learner can fit examples, but must infer the law from data and therefore pays with samples, parameters, coverage, and weaker extrapolation. The gap is a finite-resource phenomenon: the data-only path may approach the structured path asymptotically, but the structure certifies the extension at finite sample size.}
    \label{fig:concept}
\end{figure*}

We call the resulting resource difference the \emph{structure gap}. The gap can be a sample-complexity gap, as in convolutional or equivariant architectures versus fully connected networks; a certification gap, because finite examples cannot rule out alternative extensions that agree on the data but disagree on unobserved orbits; a combinatorial coverage gap, because a learner must see many supports, assignments, graph configurations, or symbolic cases; or an asymptotic-only gap, because the best structure-free learner may approach the structured solver as the data size \(N\to\infty\), but no finite dataset fully replaces the governing law.

\paragraph{Thesis.}
Known structure is information. If a learner ignores it, it must purchase that information back using data, parameters, computation, precision, coverage, and optimization luck. In some problem classes the price is exponential or infinite; in others it is finite but large enough that the solver is the correct baseline.

\paragraph{Contributions.}
We make four contributions. First, we define structure-encoded problems and structural separation as finite-resource notions rather than rhetorical claims against neural networks. Second, we give lower-bound mechanisms that formalize why finite data cannot certify global structured laws without the structure. Third, we prove an exact asymptotic-catch-up theorem in the most favorable possible case for the dense learner: the Bayes map is linear and lies in the learner class. Fourth, we provide reproducible numerical studies, including a graph-reachability experiment that exhibits a graded hierarchy: a dense FCN interpolates the training graph size, a message-passing graph neural network (GNN) retains much stronger size extrapolation, and exact breadth-first search (BFS) closes the remaining gap.

\paragraph{Scope of the claim.}
The paper is class-relative. We compare methods that encode the correct structure with learners that are deliberately denied it. A convolution, GNN, unrolled optimizer, differentiable parser, physics-informed model, or tool-using system is therefore structure-aware in our taxonomy, even if it is neural. We also do not claim that wrong or approximate structure is always beneficial. The central implication is that, when the correct structure is known in a compact form, relearning its consequences from examples imposes an avoidable finite-data burden.

\section{Related Work}

\paragraph{Approximation, learnability, and inductive bias.}
Universal approximation theorems establish broad representational capacity \citep{Cybenko1989,Hornik1989}; approximation and depth-separation results refine the efficiency picture \citep{Barron1993,EldanShamir2016,Telgarsky2016,Yarotsky2017}. Statistical learning theory and no-free-lunch results emphasize that generalization requires assumptions on the problem class \citep{WolpertMacready1997}. Recent work revisits inductive bias using Kolmogorov complexity and the no-free-lunch perspective \citep{Goldblum2024}. Our contribution is to treat correct structure not merely as a useful bias, but as side information that changes the identification problem before data arrive.

\paragraph{Equivariance and architectural structure.}
Convolution and equivariance are canonical examples of structure encoded in an architecture. Group equivariant networks exploit known symmetries \citep{CohenWelling2016}; Deep Sets encode permutation invariance \citep{Zaheer2017}; graph networks and message passing encode relational locality \citep{Battaglia2018,KipfWelling2017,Gilmer2017,Xu2019}; and geometric deep learning organizes architectures around groups, graphs, manifolds, and gauges \citep{Bronstein2021}. Theory has established sample-complexity and computational separations between convolutional/equivariant models and fully connected networks \citep{Du2018,LiZhangArora2021,MalachShalev2021,Lahoti2024,ElesedyZaidi2021,KianiNguyen2020}. Recent work studies hardness and expressivity under symmetries \citep{Kiani2024,Pacini2024,Pacini2025}. These results support a broader principle: structure changes not only constants but the learning problem itself.

\paragraph{Model-based, algorithmic, and neuro-symbolic AI.}
Compressed sensing and inverse problems show how known operators and priors make high-dimensional inference tractable \citep{CandesRombergTao2006,Donoho2006,TroppGilbert2007,Wainwright2009}. ISTA/FISTA and unrolled networks explicitly mix solvers with trainable components \citep{Daubechies2004,BeckTeboulle2009,GregorLeCun2010,Monga2021,pinilla2023unfolding,khobahi2020modelbased, khobahi2021lordnet, Shlezinger2025DeepUnfolding}. Neural algorithmic reasoning trains networks to imitate or align with classical algorithms \citep{Velickovic2020,Velickovic2022CLRS,Numeroso2023}, while neuro-symbolic AI argues for combining learned representations with symbolic knowledge and reasoning \citep{GarcezLamb2023,Matalon2023,Colelough2025}. The structure gap supplies a common resource-separation language for these lines of work.

\section{Structure-Encoded Problems}

Let \(f_n:\X_n\to\Y_n\) be a family of target maps indexed by dimension or problem size. Let \(S_n\) denote side information describing the valid structure: a group action, graph, measurement operator, program, grammar, constraint set, likelihood, physical law, objective, or solver.

\begin{definition}[Structure-aware and structure-agnostic methods]
A method class \(\Acal_n\) is \emph{structure-aware} if its elements may access \(S_n\) at design or inference time. A learner class \(\Fcal_{n,P}\) is \emph{structure-agnostic} if its architecture and training rule do not enforce the constraints encoded by \(S_n\); it receives only input-output examples and has \(P\) trainable or effective parameters.
\end{definition}

\begin{definition}[Structure gap and structural separation]
A problem family exhibits a \emph{structure gap} at error level \(\epsilon\), relative to a distributional family and learner class, if a structure-aware method achieves risk at most \(\epsilon\) using resources \(\poly(n,\log(1/\epsilon))\), while a structure-agnostic learner in the considered class requires asymptotically larger resources in at least one axis: samples, parameters, computation, precision, coverage of modes/orbits, or dimension-specific retraining. If the required resource is super-polynomial or exponential, we call the gap a \emph{structural separation}.
\end{definition}

\begin{figure*}[t]
    \centering
    \includegraphics[width=\textwidth]{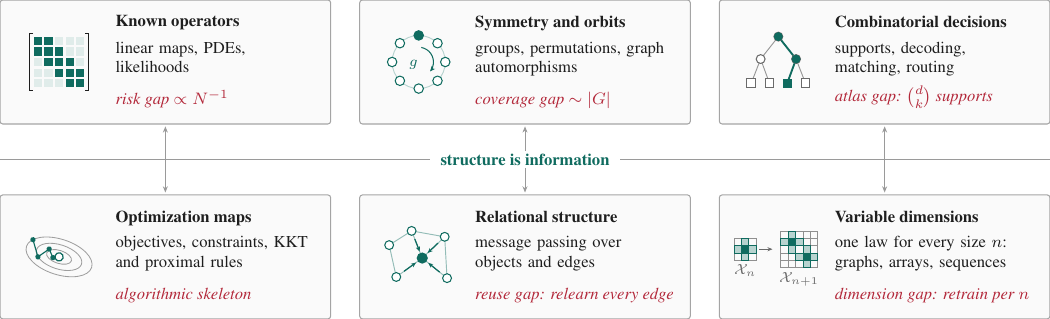}
    \caption{Problem classes where structure changes the learning problem. Each box summarizes a compact structural description---known operators, symmetry, combinatorial rules, optimization constraints, relational structure, or variable dimension---that induces a large family of input-output constraints. When the structure is correct, it acts as information that shrinks the effective hypothesis space; a structure-agnostic learner must infer these implications from examples.}
    \label{fig:taxonomy}
\end{figure*}

Note that resource is central to our formulation. For instance, a structure gap does not imply that an FCN cannot represent the map. It says that representation is not enough: the learner must identify the right extension from finite data. Figure~\ref{fig:taxonomy} summarizes common ways in which this identification problem demands many samples, parameters, or regime-covering examples. The three mechanisms below make this link explicit: operator identification (M1), orbit coverage (M2), and combinatorial atlases (M3).

\paragraph{M1: operator identification.}
If the correct map is determined by a known operator or likelihood, a classical solver can use it immediately. A learner that does not encode it must estimate the operator-induced map from samples. Even if the hypothesis class contains the exact map, finite data leaves parameter uncertainty.

\paragraph{M2: orbit coverage.}
If \(f\) is invariant or equivariant under a group \(G\), data from one point implies behavior on its orbit only if the learner knows the group action. Without encoded equivariance, finite samples cover only fragments of the orbit space.

\paragraph{M3: combinatorial atlases.}
Many structured maps are piecewise algebraic with regions indexed by discrete objects: supports, assignments, shortest paths, codewords, parse states, active constraints, or matchings. Solvers navigate this atlas using symbolic or optimization rules. Structure-agnostic learners must infer many regions and switching boundaries.

\section{Why More Data May Only Approach Structure}

In many finite problems, sufficiently expressive learners are consistent and approach the structured solver as \(N\to\infty\). In the context of neural networks, we do not claim that neural networks cannot learn structure. Our sharper claim is: for some problem classes, the best structure-free learner can only \emph{asymptotically} approach the performance of a solver that has the correct structure, and the finite-data price of approach is controlled by the coverage complexity of the hidden structure.

%\Needspace{5\baselineskip}
\subsection{Known-Operator Solvers and \mbox{Asymptotic Catch-Up}}

Consider a setting in which the learner class already contains the exact solution. The Bayes-optimal map is linear, so a one-layer dense network already contains the exact solution. The only missing ingredient is the known operator.

Consider
\begin{equation}
    x\sim \mathcal{N}(0,I_d),\qquad y = A x + \xi,
    \qquad \xi\sim \mathcal{N}(0,\sigma^2 I_m),
    \label{eq:gaussian-inverse}
\end{equation}
with known \(A\in\R^{m\times d}\) for the classical solver. The risk of a linear estimator \(W\) is \(R(W)=\E \left[\|Wy-x\|_2^2\right]\). By Gaussian conditioning, the Bayes-optimal known-operator solver is the Wiener inverse
\begin{equation}
    W_\star = A^\top (AA^\top + \sigma^2 I_m)^{-1}.
    \label{eq:wiener}
\end{equation}
Let \(\Sigma_e=\operatorname{Cov}(x-W_\star y)\).

\begin{theorem}[Exact asymptotic catch-up]
\label{thm:gaussian}
Let \((x_i,y_i)_{i=1}^N\) be i.i.d. samples from \eqref{eq:gaussian-inverse}. Let
\begin{equation}
    \widehat{W}_N \in \argmin_W \sum_{i=1}^N \|Wy_i-x_i\|_2^2
\end{equation}
be the dense least-squares learner. If \(N>m+1\) and \(AA^\top+\sigma^2I_m\) is nonsingular, then, with expectation over the training sample and an independent test pair,
\begin{equation}
    \E R(\widehat{W}_N)
    = R(W_\star)+\frac{m \, \Tr(\Sigma_e)}{N-m-1}.
    \label{eq:exact-gap}
\end{equation}
Consequently, for every finite \(N>m+1\) with \(m>0\) and \(\Tr(\Sigma_e)>0\), the expected risk of the dense empirical learner is strictly larger than that of the known-operator solver, and it approaches the solver only as \(N\to\infty\).
\end{theorem}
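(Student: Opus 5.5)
The plan is the standard random-design least-squares calculation with Gaussian regressors, organized around the decomposition $x = W_\star y + e$, where $e = x - W_\star y$. Since $(x,y)$ is jointly Gaussian and $W_\star$ is the conditional-mean map, $e$ is Gaussian with mean zero and covariance $\Sigma_e$, and $e$ is independent of $y$ (uncorrelated jointly Gaussian vectors). Let $C = \operatorname{Cov}(y) = AA^\top + \sigma^2 I_m$, which is nonsingular by assumption. For any $W$ independent of the test pair, the cross term vanishes and
\[
R(W) = R(W_\star) + \Tr\bigl((W-W_\star) C (W-W_\star)^\top\bigr),
\qquad R(W_\star)=\Tr(\Sigma_e).
\]
So the whole statement reduces to computing the expected excess term for $W=\widehat W_N$.

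Stack the training data as $Y\in\R^{N\times m}$ with rows $y_i^\top$, $X\in\R^{N\times d}$, and $E = X - Y W_\star^\top$. Then $Y^\top Y$ is almost surely invertible because $N\ge m$ and $C\succ 0$, so $\widehat W_N = X^\top Y (Y^\top Y)^{-1}$ is unique. This gives
\[
\widehat W_N - W_\star = E^\top Y (Y^\top Y)^{-1}.
\]
The rows of $E$ are i.i.d. $\mathcal N(0,\Sigma_e)$ and independent of $Y$. Conditioning on $Y$, we have $\E[E^\top M E \mid Y] = \Tr(M)\,\Sigma_e$ for any fixed $M$. Applying this with $M = Y(Y^\top Y)^{-1} C (Y^\top Y)^{-1} Y^\top$ gives
\[
\E\bigl[\Tr((\widehat W_N-W_\star) C (\widehat W_N-W_\star)^\top)\mid Y\bigr]
= \Tr(\Sigma_e)\,\Tr\bigl(C (Y^\top Y)^{-1}\bigr).
\]

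The main step is then the inverse-Wishart moment. Here $Y^\top Y \sim \mathcal W_m(C,N)$, so for $N>m+1$ we have $\E[(Y^\top Y)^{-1}] = C^{-1}/(N-m-1)$. Hence $\E\,\Tr(C(Y^\top Y)^{-1}) = m/(N-m-1)$. Combining this with the decomposition of $R$ yields \eqref{eq:exact-gap}.

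I expect the obstacles to be bookkeeping rather than substance. First, I must justify independence of $E$ and $Y$ and the conditional trace identity, which means applying the tower property carefully with the independent test pair. Second, I must cite or derive the inverse-Wishart mean; the condition $N>m+1$ is exactly what makes this mean finite. One quick derivation whitens $Y = Z C^{1/2}$ and uses rotational invariance of $(Z^\top Z)^{-1}$ to get a scalar multiple of $I$, with the scalar fixed by the standard result. The consequences then follow immediately: when $m>0$ and $\Tr(\Sigma_e)>0$ the added term is strictly positive for every finite $N$, and it decreases to $0$ as $N\to\infty$.
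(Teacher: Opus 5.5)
Your proposal is correct and follows essentially the same route as the paper: decompose $x=W_\star y+e$ with $e$ independent of $y$, write $\widehat W_N-W_\star=E^\top Y(Y^\top Y)^{-1}$, condition on $Y$ to get $\Tr(\Sigma_e)\,\Tr\bigl(C(Y^\top Y)^{-1}\bigr)$, and finish with the inverse-Wishart mean $C^{-1}/(N-m-1)$. The explicit risk decomposition $R(W)=R(W_\star)+\Tr\bigl((W-W_\star)C(W-W_\star)^\top\bigr)$ and the almost-sure uniqueness of the least-squares solution are small details that the paper leaves implicit.
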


\noindent\emph{Proof sketch.} Write \(x=W_\star y+e\), where \(e\) is independent of \(y\) and has covariance \(\Sigma_e\). Stacking samples gives \(\widehat{W}_N^\top-W_\star^\top=(Y^\top Y)^{-1}Y^\top E\). Conditional on \(Y\), the excess risk is \(\Tr(\Sigma_e)\Tr\{\Sigma_y(Y^\top Y)^{-1}\}\), with \(\Sigma_y=AA^\top+\sigma^2I_m\). Since \(Y^\top Y\) is Wishart and \(\E[(Y^\top Y)^{-1}]=\Sigma_y^{-1}/(N-m-1)\) \citep{Anderson2003}, substitution gives \eqref{eq:exact-gap}. A full derivation is in the supplementary material. \(\square\)

The result holds in a linear model where the learner class already contains the exact solver: estimating an operator-induced map from examples introduces positive finite-sample excess risk. More data closes the gap; it does not make the structure irrelevant at finite \(N\). A one-layer FCN is exactly a dense linear map, and a deeper FCN contains this map as a special case; hence the result applies directly to the most favorable dense-neural baseline for this problem. Modern overparameterization can change optimization and implicit regularization, but it cannot erase the statistical fact that a learner denied \(A\) must identify an operator-induced rule from finite samples. If a neural method encodes \(A\), the likelihood, or the solver update, it has moved to the structure-aware side of the comparison.

\subsection{A Cell-Coverage Lower Bound}

Many symbolic and structured problems are governed by regimes or cells: supports in sparse recovery, active sets in optimization, parse states in grammars, codewords in decoding, graph configurations in algorithms, or group orbits in equivariance. The following lower bound formalizes the cost of unseen cells.

\begin{proposition}[Finite cell coverage]
Let \(\X\) be a union of disjoint cells \(C_1,\ldots,C_M\) with probabilities \(\left\{p_j\right\}\). Consider the full class of binary cellwise-constant labelings \(h_b(x)=b_j\) for \(x\in C_j\), indexed by \(b\in\{0,1\}^M\). A structure-aware solver receives \(b\); an example-only learner receives \(N\) i.i.d. labeled samples. Under 0-1 loss, the minimax expected risk of any example-only learner is at least
\begin{equation}
    \frac{1}{2}\sum_{j=1}^M p_j(1-p_j)^N .
    \label{eq:cell-bound}
\end{equation}
For uniform cells, this is \(\frac{1}{2}(1-1/M)^N\), whereas the structure-aware solver has zero risk.
\end{proposition}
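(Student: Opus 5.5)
The plan is to bound the minimax risk from below by the Bayes risk under a uniform prior on the labeling. Draw \(b\) uniformly from \(\{0,1\}^M\), so that the coordinates \(b_1,\ldots,b_M\) are i.i.d.\ fair bits. Because the minimax risk is at least the average risk under any prior, it will be enough to show that every example-only learner \(\hat h\), deterministic or randomized, satisfies
\begin{equation*}
\E_{b}\,\E_{S,x}\bigl[\mathbf{1}\{\hat h_S(x)\neq h_b(x)\}\bigr]\;\ge\;\frac12\sum_{j=1}^M p_j(1-p_j)^N .
\end{equation*}

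The next step is to decompose the risk over cells. We write the expected test error as \(\sum_j p_j\,\Pr[\hat h_S(x)\neq b_j \mid x\in C_j]\), where the test point \(x\) is independent of the sample \(S=((x_i,h_b(x_i)))_{i=1}^N\). Fix a cell \(C_j\) and consider the event \(U_j\) that none of the \(N\) training inputs falls in \(C_j\).
\begin{itemize}
\item The training inputs are drawn i.i.d.\ from the marginal on \(\X\), which does not depend on \(b\). Hence \(\Pr[U_j]=(1-p_j)^N\), and \(U_j\) is independent of \(b\).
\item On \(U_j\), every observed label is \(b_k\) for some \(k\neq j\). Together with the inputs, the sample \(S\) is therefore a function of \((x_1,\ldots,x_N)\) and \((b_k)_{k\neq j}\) only. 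It is independent of \(b_j\).
\item The learner's prediction at a test point \(x\in C_j\) may also use \(x\) and internal randomness, both independent of \(b_j\). Conditional on \(U_j\), \(S\), and \(x\), the bit \(b_j\) is still a fair coin, so any prediction is wrong with probability exactly \(\frac12\).
\end{itemize}

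Dropping the nonnegative contribution from the complement of \(U_j\) then gives \(\Pr[\hat h_S(x)\neq b_j\mid x\in C_j]\ge \frac12(1-p_j)^N\). Summing over \(j\) with weights \(p_j\) yields \eqref{eq:cell-bound}.

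Two consequences follow directly. For uniform cells, \(p_j=1/M\) and the sum collapses to \(\frac12(1-1/M)^N\). The structure-aware solver receives \(b\) and outputs \(h_b\) exactly, so its risk is zero.

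The main point requiring care is the conditional-independence step. The argument must make precise that, on \(U_j\), the full information available to the learner at test time is independent of \(b_j\). This relies on three facts: the input marginal is fixed across the class, the cells are disjoint, and the prior makes the bits mutually independent. A cleaner way to formalize it is to condition on \((x_1,\ldots,x_N,x)\) and \((b_k)_{k\neq j}\) and then use the independence of \(b_j\) directly. Everything else in the proof is bookkeeping.
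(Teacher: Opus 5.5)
Your proof is correct and follows essentially the same argument as the paper: a uniform product prior on \(b\), the observation that the bit of an unvisited cell remains a fair coin given everything the learner sees, and the absence probability \((1-p_j)^N\). The one difference is cosmetic. The paper treats deterministic predictors and cites Yao's minimax principle, whereas you fold the learner's internal randomness into the conditioning; this is legitimate, since only the elementary direction (worst-case risk \(\ge\) prior-averaged risk) is needed.
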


\noindent\emph{Proof sketch.} Draw the cell labels \(b_j\) independently and uniformly from \(\{0,1\}\). If cell \(C_j\) is absent from the training set, then \(b_j\) remains a fair coin conditional on all observed data, so every predictor incurs conditional error \(1/2\) on that cell. The absence probability is \((1-p_j)^N\); summing over test-cell mass and applying Yao's minimax principle \citep{Yao1977} yields \eqref{eq:cell-bound}. If \(M\) is exponential, the coverage burden is exponential. The supplementary material gives the full proof, a continuous-covering analogue, and a sparse-atlas interpretation. \(\square\)

\subsection{Finite Samples Cannot Certify Unobserved Orbits}

Let a finite group \(G\) act on \(\X\), and let \(\rho\) be a representation on \(\Y\). A map \(f:\X\to\Y\) is equivariant if \(f(gx)=\rho(g)f(x)\).

\begin{proposition}[Unobserved-orbit non-identifiability]
Let \(T\subset\X\) be finite and suppose an orbit \(O=\{gx:g\in G\}\) has \(O\cap T=\emptyset\). If the output space contains two admissible values at a representative of \(O\) compatible with its stabilizer (as in scalar invariant labels or free actions), then there exist two \(G\)-equivariant maps \(f_1,f_2:\X\to\Y\) that agree on \(T\) but disagree on \(O\).
\end{proposition}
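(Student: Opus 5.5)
The plan is to build $f_1$ and $f_2$ from a single base equivariant map that is altered only on the unobserved orbit $O$, using the fact that an equivariant map is fixed on an orbit once its value at one representative is chosen.

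\textbf{Step 1: a base equivariant map.} We need at least one $G$-equivariant $f_0:\X\to\Y$. Choose a representative $x_{O'}$ of every orbit $O'$ and a value $y_{O'}$ that is fixed by the stabilizer, meaning $\rho(h)y_{O'}=y_{O'}$ for all $h\in\mathrm{Stab}(x_{O'})$. Then define $f_0(gx_{O'})=\rho(g)y_{O'}$. The value $y_{O'}=0$ always works when $\Y$ is a linear representation space. If $\Y$ is a bare label set under the trivial action, any constant works.

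\textbf{Step 2: well-definedness on an orbit.} This is the key check. If $gx_O=g'x_O$, then $g^{-1}g'\in\mathrm{Stab}(x_O)$. Hence $\rho(g')y=\rho(g)\rho(g^{-1}g')y=\rho(g)y$, precisely because $y$ is stabilizer-fixed. Equivariance then follows: for $x=gx_O$ we get $f(kx)=\rho(kg)y=\rho(k)f(x)$.

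This step is where the hypothesis is used. "Two admissible values compatible with its stabilizer" should be read as two distinct vectors $y_1\neq y_2$ in the fixed subspace $\Y^{\mathrm{Stab}(x_O)}$. This holds for scalar invariant labels, where $\rho$ is trivial and $|\Y|\ge 2$. It also holds for free actions, where the stabilizer is trivial and $|\Y|\ge2$.

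\textbf{Step 3: construction of $f_1,f_2$.} Set $f_i=f_0$ on $\X\setminus O$. On $O$, set $f_i(gx_O)=\rho(g)y_i$. Orbits partition $\X$ and are each $G$-stable, so the map defined piecewise over orbits is globally equivariant; checking equivariance on each orbit separately suffices.

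\textbf{Step 4: the two required properties.} The maps agree on $T$ because $T\subset \X\setminus O$ by the hypothesis $O\cap T=\emptyset$, and there both maps equal $f_0$. They disagree on $O$ because $f_1(x_O)=y_1\neq y_2=f_2(x_O)$. Invertibility of $\rho(g)$ then gives disagreement at every point of $O$.

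\textbf{Main obstacle.} The only delicate point is Step 2, the stabilizer compatibility. Without it, a map prescribed at the representative may fail to extend consistently to the rest of the orbit. For example, an odd-sign representation on an orbit with a reflection-fixed point forces the value $0$, leaving only one admissible value. I would therefore state the fixed-subspace interpretation explicitly.
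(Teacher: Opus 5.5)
Your proposal is correct and follows the paper's proof: both fix the maps identically off $O$, choose a representative $x_0\in O$ with two distinct stabilizer-compatible values, and extend by $f_i(gx_0)=\rho(g)v_i$, with well-definedness coming from stabilizer compatibility. Your version is slightly more complete, since it explicitly produces a base equivariant map on all remaining orbits (e.g.\ the zero value) and checks well-definedness and equivariance, steps the paper leaves implicit.
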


\noindent\emph{Proof sketch.} Define the two maps identically on orbits intersecting \(T\). On the unobserved orbit choose two distinct admissible values at a representative and extend equivariantly over the orbit. A more detailed derivation is in the supplementary material. \(\square\)

This is why augmentation is not the same as equivariance. Augmentation samples orbit elements; an equivariant architecture or symbolic rule constrains the entire orbit by construction.

\begin{figure}[t]
    \centering
    \includegraphics[width=\linewidth]{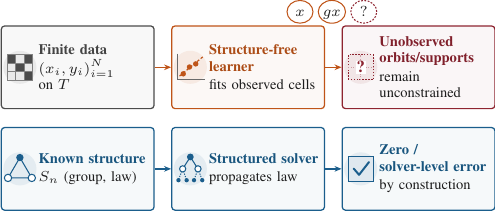}
    \caption{Finite data fits observed examples, but does not certify unobserved orbits, supports, or symbolic regimes. A structure-free learner remains underconstrained outside the sample, whereas a structured solver propagates the governing law by construction.}
    \label{fig:orbit-theory}
\end{figure}

\section{Problem Classes}

Representative classes include known inverse problems, symmetry and invariance, sparse and low-dimensional signals, spectral or convolutional signal models, optimization-defined maps, symbolic and algorithmic reasoning, and physical or causal systems. The common pattern is that the structured solver receives a short description of a large set of implications: all shifts, all relabelings, all supports, all active constraints, all parse states, or all graph sizes. A data-only learner must infer those implications from observed examples. This is why the relevant distinction is not old AI versus new AI, or symbolic versus neural; it is whether the system makes judicious use of the known law as side information. Supplementary material gives a table of examples and solvers.

\section{Numerical Investigations}

The experiments isolate the effect of withholding structure. The baselines are fully connected networks or dense empirical learners, not because these are the strongest possible neural methods, but because they represent the structure-agnostic position. A GNN, equivariant network, unrolled optimizer, differentiable parser, or physics-informed model belongs on the structure-aware side of the thesis. If such a model closes the gap, it supports rather than refutes the claim: it has encoded part of the missing structure. Thus the experiments are not offered as universal benchmark dominance; they are controlled probes of the mechanisms in the theory. Experiment configurations, numerical results, and figure-generation scripts are included in the accompanying code-and-data archive. Additional implementation details appear in the supplementary material.

\begin{figure*}[t]
    \centering
    \includegraphics[width=\textwidth]{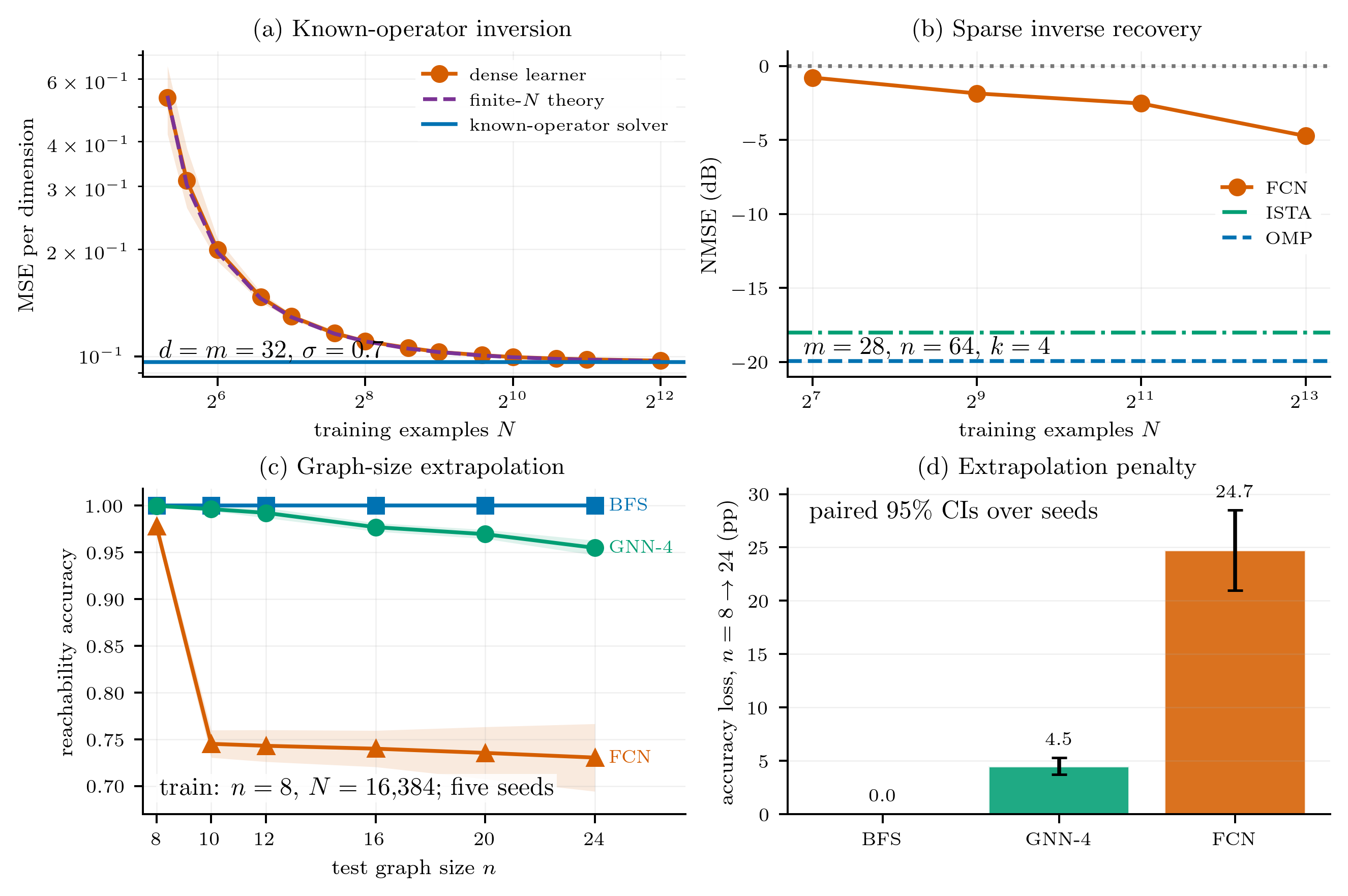}
    \caption{Structured information improves finite-data performance and extrapolation. (a) In known-operator inversion, the dense learner contains the Bayes map but must estimate it from examples; its finite-sample risk follows Theorem~\ref{thm:gaussian}. (b) In sparse recovery, OMP/ISTA use the sensing matrix and sparsity model directly, while the FCN must learn a support-indexed inverse. (c) All learned graph models are trained only on balanced 8-node reachability instances with \(N=16{,}384\). The FCN nearly solves the training graph size but degrades on larger graphs; a four-layer message-passing GNN retains much stronger performance, while BFS is exact. Bands are 95\% confidence intervals over five seeds. (d) Paired accuracy loss from 8 to 24 nodes. Tests are within a shared 24-node padded representation envelope.}
    \label{fig:numerics}
\end{figure*}

\paragraph{Known-operator inversion.}
We instantiate \eqref{eq:gaussian-inverse} with \(d=m=32\) and \(\sigma=0.70\). Figure~\ref{fig:numerics}(a) compares the dense least-squares learner to the known-operator Wiener solver. The empirical curve matches Theorem~\ref{thm:gaussian}: the learner approaches the solver, but the excess is positive for every finite \(N\) tested.

\paragraph{Sparse inverse recovery.}
We generate \(k\)-sparse signals \(x\in\R^{64}\) with \(k=4\), measurements \(y=Ax+\varepsilon\) with \(A\in\R^{28\times64}\), and SNR 25 dB. OMP and ISTA receive \(A\) and the sparsity model; the FCN sees only examples \((y,x)\). The support atlas has \(\binom{64}{4}\) possible supports. Figure~\ref{fig:numerics}(b) shows that the FCN improves with samples but remains far above the solvers' reconstruction error at the considered finite sample sizes.

\paragraph{Graph reachability and size extrapolation.}
We train on balanced source-target reachability problems over 8-node Erd\H{o}s--R\'{e}nyi graphs and test on graphs with 8--24 nodes inside a shared 24-node padded representation. The comparison separates information patterns: the FCN receives a flattened adjacency matrix and active-node mask, the message-passing GNN encodes relational locality and weight sharing \citep{Gilmer2017}, and BFS executes the exact graph algorithm \citep{Cormen2022}. With \(N=16{,}384\) training graphs, the FCN attains \(97.79\%\) accuracy at 8 nodes but \(73.05\%\) at 24 nodes; GNN-4 attains \(99.97\%\) and \(95.48\%\), respectively; BFS is exact. Across five seeds, the paired 8-to-24-node loss is \(24.74\pm3.76\) percentage points for the FCN and \(4.49\pm0.79\) for GNN-4, with 95\% confidence intervals (Figure~\ref{fig:numerics}(c,d)). The result exhibits a graded structure gap: relational inductive bias closes most of the dense learner's extrapolation penalty, and explicit algorithmic structure closes the remainder. This setting follows the neural algorithmic-reasoning motivation of learning graph procedures and testing beyond the training size \citep{Velickovic2020,Velickovic2022CLRS}.

Supplementary material provides matched-filter and permutation-equivariance controls, complete GNN-depth curves, AUROC results, and implementation details.

\section{Implications for AI}

\paragraph{Structure is not merely a regularizer.}
Regularization chooses among hypotheses after seeing data. Correct structure changes the hypothesis space before data arrive. A graph, grammar, solver, or conservation law can imply infinitely many labels or constraints. Treating it as just another empirical bias underestimates its information content.

\paragraph{The right baseline is solver-aware.}
In structured domains, the relevant comparison is not neural versus non-neural. It is structure-aware versus structure-agnostic. A GNN, convolution, differentiable parser, physics-informed model, or unrolled optimizer may be neural, but it is not structure-free. Conversely, a black-box network trained on many examples may be powerful, but it pays to rediscover what the solver already knows.

\paragraph{Large models do not repeal coverage.}
Scaling model size can improve representation and optimization. It does not eliminate the need to identify unobserved cells, supports, or regimes. A larger FCN may interpolate the observed data more flexibly, but without the structural law it cannot certify the correct extension. This is why retrieval, tools, symbolic solvers, program execution, and domain-grounded modules remain central to reliable AI systems.

\paragraph{Use learning where the structure ends.}
The strongest systems will often combine both sides: symbolic and algorithmic components for known laws, neural components for perception, model mismatch, nuisance parameters, and unknown residuals. This is the common thread behind model-based deep learning, neural algorithmic reasoning, and neuro-symbolic AI.

\section{Limitations}

The structure gap is not a no-free-lunch concept. It is a conditional statement: if a compact, correct structural description is available, then denying it to the learner can create a finite-resource identification problem. If the structure is wrong, incomplete, or too costly to specify, structure-aware methods can be biased or brittle; then learning the residual, learning the structure, or using weaker priors may be preferable. The experiments isolate specific mechanisms using fully connected networks as structure-agnostic baselines. Graph extrapolation is evaluated within a shared 24-node padded representation under a controlled Erd\H{o}s--R\'{e}nyi generator. 

Finally, asymptotic consistency does not contradict the paper. In several regimes the dense learner does converge; the point is that convergence may be toward a solver that already had the law at sample size zero. For practice, the message is diagnostic: before scaling data and parameters, ask whether structure exists. If so, the default should be to encode it and learn only what remains unknown.

\section{Conclusion}

We introduced the structure gap: a finite-resource separation between systems that use known structure and learners that see only examples. The message is not that deep learning is weak. The message is that structure is crucial information. When a solver, symmetry, graph, grammar, measurement operator, or physical law is correct, a data-only learner must relearn consequences that the structured method obtains by construction.

We do not claim that neural networks cannot match structured solvers. Often, with infinite data and suitable optimization, they can approach the same risk. What matters in practice is that, for important problem classes, finite datasets may leave unobserved regimes that access to the known structure would otherwise resolve. In AI systems that must be reliable, extrapolative, and certifiable, more data is not a substitute for knowing the problem.

\clearpage
\onecolumn
\appendix
\setcounter{figure}{0}
\renewcommand{\thefigure}{S\arabic{figure}}
\setcounter{table}{0}
\renewcommand{\thetable}{S\arabic{table}}
\begin{center}
{\Large\bfseries Supplementary Material}\\[0.7em]
{\large The Structure Gap: When More Data Cannot Replace Structured Reasoning}\\[0.6em]
Kheer Sagar Patel and Mojtaba Soltanalian
\end{center}
\vspace{1.2em}
\section{Proof of the Gaussian asymptotic-catch-up theorem}
Let \(\Sigma_y=AA^\top+\sigma^2I_m\). For Gaussian variables,
\[
  \E[x\mid y]=W_\star y,\qquad W_\star=A^\top\Sigma_y^{-1},
\]
and the residual \(e=x-W_\star y\) is independent of \(y\) with covariance
\[
  \Sigma_e=I_d-A^\top\Sigma_y^{-1}A.
\]
Stack the training features, targets, and residuals as \(Y\in\R^{N\times m}\), \(X\in\R^{N\times d}\), and \(E\in\R^{N\times d}\). Ordinary least squares gives
\begin{align}
    \widehat W_N^\top
    &= (Y^\top Y)^{-1}Y^\top X \\
    &= W_\star^\top+(Y^\top Y)^{-1}Y^\top E .
\end{align}
For a fresh feature \(y\), the conditional excess risk is
\begin{align}
&\E\left[\|(\widehat W_N-W_\star)y\|_2^2\mid Y\right] \\
&=\E\left[\Tr\{(\widehat W_N-W_\star)\Sigma_y(\widehat W_N-W_\star)^\top\}\mid Y\right] \\
&=\Tr(\Sigma_e)\Tr\{\Sigma_y(Y^\top Y)^{-1}\}.
\end{align}
Since \(Y^\top Y\sim\mathcal W_m(\Sigma_y,N)\) and \(N>m+1\),
\[
    \E[(Y^\top Y)^{-1}]=\frac{\Sigma_y^{-1}}{N-m-1}.
\]
Taking expectation over \(Y\) yields
\[
    \E[R(\widehat W_N)]-R(W_\star)
    =\frac{m\,\Tr(\Sigma_e)}{N-m-1}.
\]
This proves the theorem. The expectation is over both the training sample and an independent test pair. The condition \(\Tr(\Sigma_e)>0\) excludes the noiseless degenerate case in which there is no irreducible conditional uncertainty.

\section{Proof of the finite-cell lower bound}
Let \(\mathcal X=\bigcup_{j=1}^M C_j\), with \(\Pr(X\in C_j)=p_j\). Consider the full class of binary cellwise-constant labelings \(h_b(x)=b_j\) for \(x\in C_j\), indexed by \(b\in\{0,1\}^M\). Place the product prior \(b_j\stackrel{\mathrm{i.i.d.}}{\sim}\mathrm{Bernoulli}(1/2)\) on this class. If the training set contains no sample from \(C_j\), then \(b_j\) remains a fair coin conditional on all observed data. Hence every deterministic predictor has conditional error \(1/2\) on that cell. The absence probability is \((1-p_j)^N\), so the Bayes risk under the product prior is
\[
    \frac12\sum_{j=1}^M p_j(1-p_j)^N.
\]
Yao's minimax principle \citep{Yao1977} implies that this Bayes risk lower-bounds the minimax risk of any randomized example-only learner. For uniform cells, the expression reduces to \(\frac12(1-1/M)^N\). A structure-aware solver that receives \(b\) has zero risk by assumption.

\section{Proof of the unobserved-orbit non-identifiability proposition}
Let \(G\) act on \(\mathcal X\), and suppose the finite training set \(T\) does not intersect an orbit \(O=\{gx:g\in G\}\). Define two equivariant maps identically on every orbit that intersects \(T\). On the unobserved orbit, choose a representative \(x_0\in O\). If the output representation admits two distinct values \(v_1,v_2\) compatible with the stabilizer of \(x_0\), define \(f_i(gx_0)=\rho(g)v_i\) for \(i\in\{1,2\}\). This is well-defined by stabilizer compatibility and is equivariant by construction. The maps agree on \(T\) but disagree on \(O\). The common scalar-invariant case is immediate because all scalar labels are compatible with the stabilizer.

\section{Representative structure-encoded problem classes}
The problem classes discussed in the main paper are summarized in Table~\ref{tab:classes}. The table is not meant as an exhaustive taxonomy; rather, it shows how diverse AI and signal-processing tasks share the same pattern: a compact structural object induces many input-output constraints, and a learner that does not receive that object must infer its consequences from examples.

\small
\begin{longtable}{L{0.20\textwidth}L{0.23\textwidth}L{0.25\textwidth}L{0.22\textwidth}}
\caption{Representative structure-encoded problem classes. The central separation is between using the structural description and relearning its consequences from examples.}\label{tab:classes}\\
\toprule
\textbf{Problem class} & \textbf{Structural object} & \textbf{Structure-aware method} & \textbf{Why a data-only learner pays} \\
\midrule
Known inverse problems & Forward operator, likelihood, noise model & Wiener/Kalman filtering \citep{Wiener1949,Kalman1960}, variational solvers and proximal methods \citep{BoydVandenberghe2004,Rockafellar1976} & Must estimate an operator-induced map; finite-sample excess persists. \\
Symmetry and invariance & Group action, permutation, translation, rotation & Equivariant networks, group convolution, invariant solvers \citep{CohenWelling2016,Zaheer2017,Bronstein2021} & Must cover or infer group orbits; finite data cannot certify unobserved transformations. \\
Sparse and low-dimensional signals & Sensing matrix, sparsity, low rank, dictionary & OMP \citep{TroppGilbert2007}, LASSO \citep{Tibshirani1996}, ISTA/FISTA \citep{Daubechies2004,BeckTeboulle2009}, AMP \citep{DonohoMalekiMontanari2009}, unrolling \citep{GregorLeCun2010,Monga2021} & Must learn a combinatorial atlas of supports, ranks, or active sets. \\
Spectral and signal models & Fourier basis, array manifold, convolutional template & FFT \citep{CooleyTukey1965}, matched filter \citep{Turin1960}, MUSIC/ESPRIT \citep{Schmidt1986,RoyKailath1989}, beamforming \citep{VanTrees2002} & Must rediscover harmonic geometry and phase relationships from examples. \\
Optimization-defined maps & Objective, constraints, KKT conditions & Convex solvers \citep{BoydVandenberghe2004}, dynamic programming \citep{Bellman1957}, proximal algorithms \citep{Rockafellar1976} & Must learn active-set or recurrence structure rather than use optimality conditions. \\
Symbolic and algorithmic reasoning & Grammar, rule base, graph recurrence, program & Parsers \citep{HopcroftUllman1979}, SAT \citep{Cook1971}, MILP \citep{NemhauserWolsey1988}, shortest paths and breadth-first search (BFS) \citep{Dijkstra1959,Cormen2022}, theorem proving \citep{Robinson1965} & Must learn exact discrete transitions and extrapolate to unseen sizes or symbols. \\
Physical and causal systems & Forward model, conservation law, causal graph & PDE solver, simulator, model-predictive control \citep{RawlingsMayne2009} & Must infer governing laws from finite trajectories and may rely on shortcuts that fail under distribution shift or counterfactual intervention \citep{geirhos2020shortcut,scholkopf2021toward}. \\
\bottomrule
\end{longtable}
\normalsize

\section{Continuous-covering variant}
This section expands the continuous-regime discussion following the finite-cell proposition in the main paper. It applies when the relevant cells are indexed by a continuum such as an angle, delay, pose, frequency, or physical parameter. Let the hidden structural regime be parameterized by a compact metric set \(\Theta\) with covering number \(\mathcal N(\Theta,d,\eta)\). Suppose that for any \(\eta\)-ball not represented in the training data, there are two admissible extensions that agree on all observed samples but differ by loss margin \(\gamma\) on a set of probability at least \(p_\eta\). A distribution-free certificate at resolution \(\eta\) requires represented samples in all covering cells. If \(\Theta\) has intrinsic dimension \(q\), then \(\mathcal N(\Theta,d,\eta)\) typically scales as \(\eta^{-q}\). Thus exact continuous-group certification cannot be obtained from finite samples without additional regularity or the structural law.

\section{Sparse atlas interpretation}
This section gives the sparse-recovery specialization of the cell-coverage argument referenced in the main paper. For sparse recovery, the support set has size \(M=\binom{n}{k}\). Conditional on a support \(T\), the noiseless inverse is an affine or linear rule on the measurement subspace generated by \(A_T\). A solver given \(A\) and the sparsity model uses correlations, residual projections, or convex geometry to navigate these cells. A data-only predictor that does not exploit shared structure across supports must cover the support atlas. Under uniform support sampling, the coupon-collector lower bound implies that seeing at least one example from every support with probability \(1-\delta\) requires \(N=\Omega(M\log(M/\delta))\), far larger than the description length of the matrix and solver.

\section{Graph-reachability experiment}
The main paper reports the graph-reachability result at the largest training size. All learned models are trained on balanced source-target reachability problems over 8-node undirected Erd\H{o}s--R\'{e}nyi graphs, with edge probability sampled independently per graph from \(\mathrm{Uniform}(0.05,0.35)\). The source is node 0 and the target is node \(n-1\). Inputs are padded to a common \(24\times24\) adjacency representation. Evaluation uses 2,000 balanced test graphs for each size \(n\in\{8,10,12,16,20,24\}\) and each of five seeds.

The dense fully connected network (FCN) receives the flattened adjacency and active-node mask and has hidden widths 256--256--128 (252,673 trainable parameters). The reported four-layer graph neural network (GNN-4) uses four sum-aggregation message-passing layers with hidden dimension 64, source-target concatenation readout, and no self loops (58,113 trainable parameters). Both use AdamW with learning rate \(10^{-3}\), weight decay \(10^{-4}\), batch size 256, at most 100 epochs, and validation-based early stopping. Breadth-first search (BFS) is evaluated directly at test time \citep{Cormen2022}. Table~\ref{tab:graph} and Figure~\ref{fig:graph-supp} report mean performance and 95\% Student-\(t\) confidence intervals over the five seeds.

\begin{table}[h]
\centering
\small
\begin{tabular}{lccc}
\toprule
Method & Accuracy, \(n=8\) & Accuracy, \(n=24\) & Loss 8--24 (pp) \\
\midrule
BFS & \(100.00\pm0.00\) & \(100.00\pm0.00\) & \(0.00\pm0.00\) \\
FCN & \(97.79\pm0.81\) & \(73.05\pm3.61\) & \(24.74\pm3.76\) \\
GNN-4 & \(99.97\pm0.06\) & \(95.48\pm0.80\) & \(4.49\pm0.79\) \\
GNN-8 & \(99.98\pm0.03\) & \(96.03\pm1.80\) & \(3.95\pm1.78\) \\
\bottomrule
\end{tabular}
\caption{Graph reachability at \(N=16{,}384\). Entries are means \(\pm\) 95\% confidence intervals over five seeds.}
\label{tab:graph}
\end{table}

\begin{figure}[H]
    \centering
    \includegraphics[width=\textwidth]{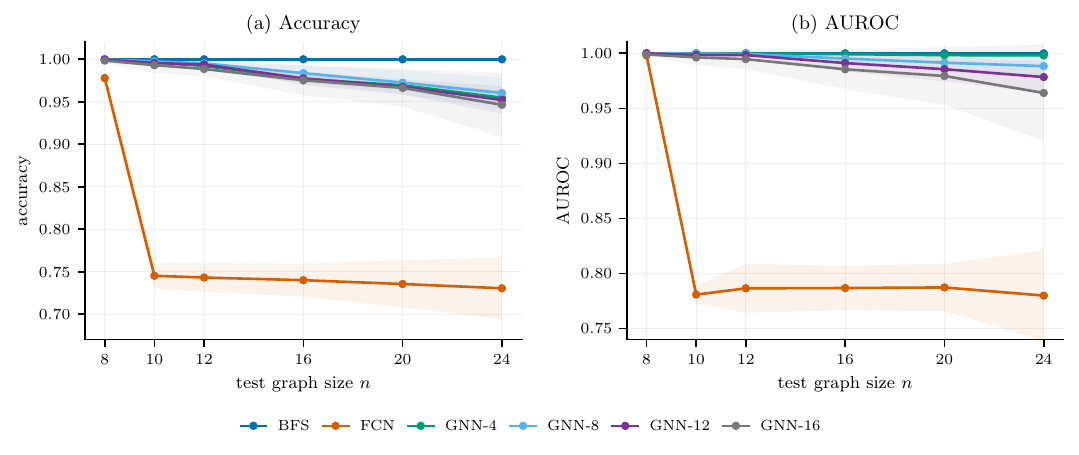}
    \caption{Graph-size extrapolation for all tested message-passing depths. All learned models are trained on 8-node graphs with \(N=16{,}384\). Curves show means and 95\% confidence bands over five seeds.}
    \label{fig:graph-supp}
\end{figure}

The graph result should be understood as extrapolation within a common 24-node padded representation envelope, not as arbitrary-size algorithm execution. The generator also changes expected degree as \(n\) grows because the edge-probability range is fixed. These choices make the experiment a controlled demonstration of relational inductive bias rather than a universal benchmark of graph reasoning.

\section{Additional numerical controls}
The matched-filter and permutation-equivariance experiments provide additional examples of the continuous-orbit and finite-augmentation mechanisms.

\begin{figure}[t]
    \centering
    \begin{subfigure}[t]{0.48\textwidth}
        \centering
        \includegraphics[width=\linewidth]{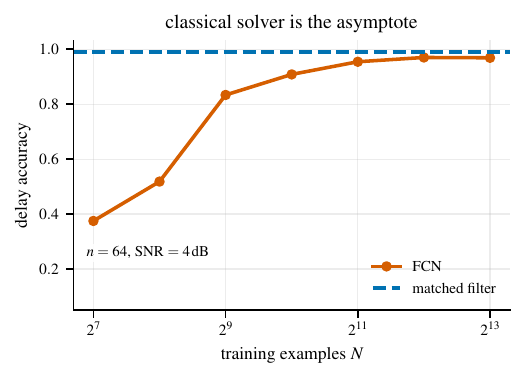}
        \caption{Matched-filter delay estimation.}
    \end{subfigure}\hfill
    \begin{subfigure}[t]{0.48\textwidth}
        \centering
        \includegraphics[width=\linewidth]{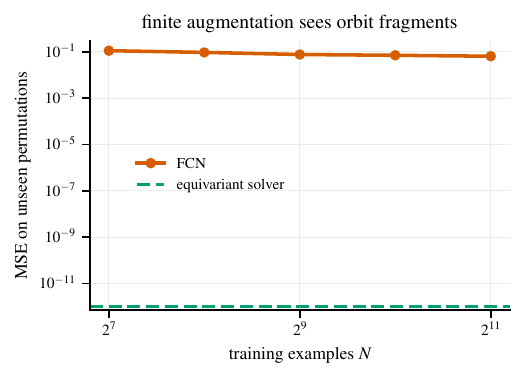}
        \caption{Permutation-equivariant regression.}
    \end{subfigure}
    \caption{Additional controlled examples. The matched filter knows the template orbit; the equivariant computation knows the permutation law. In both cases the FCN must recover the corresponding implications from finite samples.}
    \label{fig:additional-controls}
\end{figure}

\section{Reproducibility}
A general-purpose large language model was used to assist with drafting, organization, code generation, and editing of this manuscript.  The arXiv submission contains the manuscript sources and figures. The accompanying code-and-data archive includes experiment scripts, result files, requirements, and figure-generation code. Simulation plots can be regenerated from the included results using Matplotlib; conceptual diagrams are separate schematics. The dense-network baselines are structure-agnostic.  Any neural architecture that hard-codes the relevant operator, group action, graph topology, recurrence, or optimizer is counted as structure-aware under the definitions in the main paper.

\clearpage
\bibliography{references}

\end{document}